\documentclass[10pt,conference]{IEEEtran}
\IEEEoverridecommandlockouts
% The preceding line is only needed to identify funding in the first footnote. If that is unneeded, please comment it out.
\usepackage{cite}
\usepackage{amssymb,amsfonts}
\usepackage{algpseudocode}
\usepackage{graphicx}
\usepackage{textcomp}
\usepackage[tbtags]{amsmath}
\usepackage{diagbox}
\usepackage{color}
\usepackage{amsthm} 
\usepackage{algorithm}
\usepackage{algorithmicx}
\usepackage{booktabs}
\usepackage{amsmath}
\usepackage{bbm}
\usepackage{subfig}
\newtheorem{theorem}{Theorem}
\usepackage{graphicx}
\usepackage{epsfig}
\usepackage{epstopdf}
\usepackage{multirow}
\usepackage{caption}
\usepackage{marvosym}
\usepackage[thicklines]{cancel}
\usepackage[left=0.64in,right=0.68in,bottom=1.04in,top=0.71in]{geometry}%0.98 bottom 0.59 right

\def\BibTeX{{\rm B\kern-.05em{\sc i\kern-.025em b}\kern-.08em
    T\kern-.1667em\lower.7ex\hbox{E}\kern-.125emX}}

\newtheorem{lemma}{Lemma}

\newtheorem{Assumption}{Assumption}
%\newtheorem{proof}{Proof}[section]

%\addtolength{\bottommargin}{-0.04in}

 %\addtolength{\oddsidemargin}{-0.05in} \addtolength{\evensidemargin}{-0.05in} \addtolength{\textwidth}{0.1in}  \addtolength{\topmargin}{-0.01in}  \addtolength{\textheight}{0.20in}  

%\addtolength{\oddsidemargin}{-0.08in} % 左边距0.079
%\addtolength{\evensidemargin}{-0.3in} % 右边距 0.08
%\addtolength{\textwidth}{0.1in} % 页面宽度 
%\addtolength{\topmargin}{0.04in} % 上边距（减少）  0.05
%\addtolength{\textheight}{-0.02in} % 页面高度（增加） 

\linespread{1}
\captionsetup[table]{labelformat=simple,labelsep=newline,textfont=sc}

%20240817 外加参数表。。。10个

%\columnsep 0.201
\begin{document}

\title{Mobility-Aware Federated Learning: Multi-Armed Bandit Based Selection in Vehicular Network}

\author{
	\IEEEauthorblockN{Haoyu Tu\IEEEauthorrefmark{1}\IEEEauthorrefmark{2},  Lin Chen\IEEEauthorrefmark{2},  Zuguang Li\IEEEauthorrefmark{1}, Xiaopei Chen\IEEEauthorrefmark{1}, and Wen Wu\textsuperscript{\Letter}\IEEEauthorrefmark{1}\\}
	\IEEEauthorblockA{\IEEEauthorrefmark{1}Frontier Research Center, Pengcheng Laboratory, Shenzhen 518000, China\\}
	\IEEEauthorblockA{\IEEEauthorrefmark{2}School of Computer Science and Engineering, Sun Yat-sen University, Guangzhou 510006, China\\
	%\IEEEauthorblockA{\IEEEauthorrefmark{4}School of Future Technology, South China University of Technology, Guangzhou 510641, China\\
		\{tuhy, lizg01, chenxp, wuw02\}@pcl.ac.cn, chenlin69@mail.sysu.edu.cn}
\thanks{\textsuperscript{\Letter}\IEEEauthorrefmark{1}Wen Wu is the corresponding author of this paper.}}

%}

\maketitle

%20241004 rewise的暂时都没改 20241008 revise的有一部分改了。。。

\begin{abstract}
In this paper, we study a vehicle selection problem for federated learning (FL) over vehicular networks. Specifically, we design a mobility-aware vehicular federated learning (MAVFL)  scheme in which vehicles drive through a road segment to perform FL. Some vehicles may drive out of the segment which leads to unsuccessful training. In the proposed scheme, the real-time successful training participation ratio is utilized to implement vehicle selection. We conduct the convergence analysis to indicate the influence of vehicle mobility on training loss. Furthermore, we propose a multi-armed bandit-based vehicle selection algorithm to minimize the utility function considering training loss and delay. The simulation results show that compared with baselines,  the proposed algorithm can achieve better training performance with approximately 28\% faster convergence.
\end{abstract}

%\begin{IEEEkeywords}
%	component, formatting, style, styling, insert
%\end{IEEEkeywords}

%\addtolength{\hoffset}{-0.08in}

%20240821 改成需要各种业务，数据分布的，。。两个通信不用提。。。。however尽量不用，改成while....
\section{Introduction}
Data-driven machine learning (ML) tasks in vehicular networks such as trajectory prediction, object detection and traffic sign classification enhance road safety and alleviate urban congestion to facilitate autonomous driving \cite{Shen2022}. The distributed data of each vehicle is collected by various sensors such as GPS (Global Positioning System), LiDAR (Light Detection and Ranging) and cameras, and increased data privacy and communication overhead is brought in when local data is offloaded to the server. Federated learning (FL) enables vehicles to collaboratively train models from the server aggregated from all vehicles without sharing local data directly and reduces the communication overhead caused by large amounts of data transmission between vehicles to the server and cloud \cite{Li2021,Wu2022}. Nevertheless, vehicle mobility brings issues for FL in vehicular networks with dynamic communication channels and time-varying available vehicle set~\cite{Posner2021}. 

%20240927增加论文引用，tbd

%20240821 in the literature......FL....

In the literature, the research to speed up FL convergence in vehicular networks can be roughly divided into two categories: model aggregation design \cite{Mob01,Li2022} and resource allocation \cite{Zeng2022,Xie2023} . In terms of model aggregation, the successful probability of vehicle training is optimized by designing a weighted parameter with the duration time and size of the dataset for each vehicle \cite{Mob01}. The inner-cluster and inter-cluster training scheme is proposed for vehicles with multi-hop clusters \cite{Li2022}. An incentive mechanism based on contract theory is proposed to select vehicles with better quality wireless channels and the dataset size \cite{Zeng2022}. Xie \emph{et al.} in \cite{Xie2023} optimized the round duration and local iteration number to measure the performance of FL in vehicular networks considering mobility. Different from existing papers, we study the impact of moving vehicles and develop an online algorithm to select vehicles based on their locations.

Designing an efficient FL scheme in vehicular networks faces the following challenges. Firstly,  the designed model aggregation algorithms require prior knowledge of the vehicle’s mobility which may not be easy to get such as the historical trace information in the current area. Secondly, the modeling of mobility involves viewing movement as a given probability, but the probability is statistical in long-time period and difficult to provide \emph{real-time} information to guide vehicle selection and resource allocation choices.

In this paper, we design a \underline{M}obility-\underline{A}ware \underline{V}ehicular \underline{F}ederated \underline{L}earning (MAVFL) scheme, where vehicles drive through the road segment to participate in FL with a collaborative base station~(BS).  We propose the real-time ratio that vehicles successfully upload models. We conduct the theoretical analysis of convergence and demonstrate that the ratio significantly influences convergence. Based on analytical results, we formulate the optimization problem to maximize the utility function while minimizing training loss and training delay. We design an MAB-based vehicle selection algorithm to solve the optimization problem. Extensive simulation results show the effectiveness of the proposed scheme in terms of convergence speed and training delay.%We consider the scheme as CAVs moving on the road, and only those within range of the base station can proceed training. During the process of model training, some vehicles may have left the segment due to the mobility of vehicle movement, which makes them unable to continue training.
%During the process of model training, some CAVs may have left the segment due to the mobility of vehicle movement, which makes them unable to continue training. Different from above papers, we focus on the influence of  moving out vehicles and design online algorithm to implement  vehicle selection based on real-time information. The proposed scheme can achieve better performance both on convergence and training delay.
%20240808实时性的体现。。。

The main contributions of our paper are summarized as follows:

\begin{itemize}
\item  We propose an MAVFL scheme and conduct the convergence proof of the proposed scheme.% and minimize training delay.%considering real-time indicator as the proportion of vehicles successfully participating in training. 

\item  We formulate an optimization problem to speed up the convergence.  We propose an MAB-based vehicle selection algorithm to solve the problem.  %balance the exploration and exploitation. 
%through choosing appropriate set of vehicles
%\item We conduct simulation of proposed MAB-based vehicle selection algorithm and the simulation results show the effective of proposed algorithm both in delay and convergence speed.
\end{itemize}

%The remainder of this paper is organized as follows. In Section~\ref{2}, we introduce the system model of our MAVFL scheme. The convergence analysis is given in Section~\ref{3}. The optimization problem is formulated in Section~\ref{4}. We develop the MAB-based vehicle selection algorithm in Section~\ref{5}. The simulation results are given in Section~\ref{6}. Section~\ref{7} concludes the paper. 
\begin{figure}
	\centering
    \includegraphics[width=0.5\textwidth]{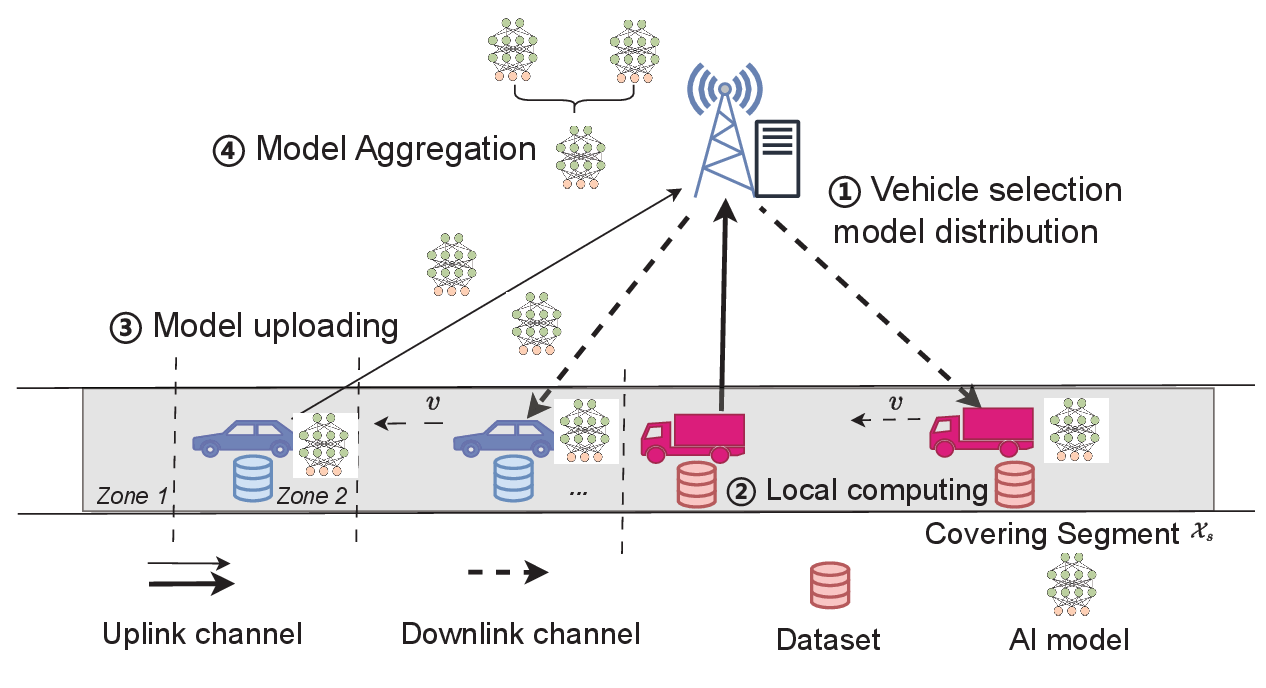}
	\caption{System Model}
	\label{fig}
\end{figure}

%20241004第二部分回头自己重新看

%\section{Related Work}

\section{Considered Scenario and System Model}\label{2}
 %We also give the communication and computation model of vehicles for our scheme. 

%20240511Motivated by....[] Mob-FL....前面这些不一定要在这里写。。。
%20240511
\subsection{Considered Scenario}
 As shown in Fig. \ref{fig}, we consider a segment of road covered by a BS with a collaborative server and $K_0$ vehicles that arrive at this stretch of road over a  period of time. These vehicles will communicate with the BS in the segment of road to train the model through FL paradigm, and each vehicle $k\in\{1,2,\ldots,K_0\}$ has collected data samples as $D_k$. {Here, the data is highly pertinent to vehicular services
 such as image recognition and trajectory prediction, which has
 been collected and well-labeled before training.}  The length of covering segment is $L$, and the segment is divided into multiple zones denoted as $\mathcal{Z}=\{1,2,\ldots,Z\}$. As such, the distance between the vehicle and the BS in the same zone $z\in\mathcal{Z}$ is considered as being the same.  %As such in same zone $z\in\mathcal{Z}$ the vehicles have the same distance between BS.  
%Subsequently, 
\subsection{Vehicular Federated Learning Scheme}
 In the proposed scheme, the training process contains $R$ rounds before the deadline $T_d$. The server is able to get the location and velocity of vehicle $k$ within the segment during training as $x_k^t$ and $v_k^t$ at $t_0<T_d$.

During the training process of the MAVFL scheme, the loss function for vehicle $k$ is expressed as $$f(\mathbf{w}_k,D_k)=\frac1{|\mathcal{D}_k|}\sum_{i\in \mathcal{D}_k}\ell(\mathbf{w}_k;z^i_k),$$
%\begin{equation}
%	\begin{aligned}
%		f(\mathbf{w}_k,D_k)& =\frac1{|\mathcal{D}_k|}\sum_{i\in {D}_k|}\ell(\mathbf{w}_k;z^i_k),  
%	\end{aligned}
%\end{equation}
where $\mathcal{D}_k$ means the indices of samples in ${D}_k$, $z^i_k$ is the data sample representation for sample $i\in\{1,2,\ldots,|\mathcal{D}_k|\}$ and $\ell$ is the local loss function. Vehicle $k$ will train model $\mathbf{w}$ to find the optimal model as 
$\hat{\mathbf{w}_{k}} =\arg\min_{\mathbf{w}_k}\frac1{|\mathcal{D}_k|}\sum_{i=1}^{|\mathcal{D}_k|}\ell(\mathbf{w}_k;z^i_k),$ and the global loss function  is $F(w)=\sum_{k=1}^K q_k f(\mathbf{w}_k,D_k),$ where $q_k$ is the aggregation weight of vehicle $k$. %Next we introduce the procedures of federated learning influenced by mobility of vehicle. 
%20240822增加车辆选择的部分
\subsubsection{Vehicle Selection and Model Distribution}
At the beginning of round $r\in\{1,2,\ldots,R\}$, the server owns the newest global model $w^{(r)}$, then it will make the vehicle selection decision to generate the set of vehicles $\mathcal{S}^{r}$ and distribute the model to vehicles within its covering segment as 
\begin{equation}
	\mathbf{w}^{r,0}_k \leftarrow \mathbf{w}^r,   
	\forall k, x_k^{r,0}\in  \mathcal{X}_s, k \in\mathcal{S}^{r},
\end{equation}
%20240511下午继续写。。。
where $\mathbf{w}^{r,0}_k$ means the local model for vehicle $k$ in round $r$ and epoch 0, $\mathcal{X}_s$ is the location range of covering segment, and $x_k^{r,0}$ is the location of vehicle $k$ at the beginning of round~$r$. %Here the set of CVs receiving global model in round $r$ is got as $\mathcal{S}^{r}$ for vehicle $k \in \mathcal{S}^{r}$. 

\subsubsection{Local Updating}
After receiving model $w^{r,0}_k$, vehicle $k$ will perform local stochastic gradient descent (SGD) for $E$ epochs. For local training epoch $e\in\{0,\ldots,E-1\}$, the gradient descent is expressed as $g_{k}^{r,e}\leftarrow\frac1{|\mathcal{D}_k|}\sum_{i=1}^{|\mathcal{D}_k|}\nabla\ell(\mathbf{w}_k^{r,e};z^i_k)$ and vehicle $k$ performs local SGD as
\begin{equation}
	\mathbf{w}_k^{r,e+1}\leftarrow \mathbf{w}_k^{r,e}-\eta {g}_{k}^{r,e}, \text{for }e\in[0,E-1].  
\end{equation}

\subsubsection{Model Uploading}
After local updating, vehicle $k$ will try to upload model updates $g^{r}_k=\sum_{e=0}^{E-1} g_k^{r,e}$ to the server. Due to the movement of vehicles, it may drive out of the covering segment of the BS, which causes the model missing. We define the dropout indicator $\mathbbm{1}_k^{r}\in\{0,1\}$ as
\begin{equation}\nonumber
	\mathbbm{1}_k^{r}=\left\{
	\begin{aligned}
		&1, \quad &x_k^{r,E}\in\mathcal{X}_s,\\
		%&\sqrt{(x_k-L/2)^2+H^2} \quad &\text{case 2} \\
		&0, \quad &x_k^{r,E}\notin\mathcal{X}_s
	\end{aligned}
	\right.
\end{equation}
to represent the state of whether vehicle $k$ stays in the covering segment after local updating with $E$ epochs in round $r$. 

   %To handle this problem, the model transferring step is brought in.   

%\subsubsection{Model uploading}
%In this stage, each vehicle $k$ will transfer its local model $\mathbf{w}_k^{(r,E)}$ to near vehicles as $j \in \mathcal{H}_k^{(r)}$. After model transferring, each vehicle $k$ will receive models as 
%$$w_{j}^{(r,E)}, \forall j \in \mathcal{H}_k^{(r)}.$$ Here $\mathcal{H}_k^{(r)}$ is the set of vehicles within the V2V communication range of vehicle $k$ including vehicle $k$ itself. Each vehicle $k$ is dedicated to aggregate received models to generate new model as 
%\begin{equation}
%	w_{\text{a},k}^{(r)}=\sum_{j\in \mathcal{H}_k^{(r)}}w_{j}^{(r,E)}/|\mathcal{H}_k^{(r)}|.
%\end{equation}
%Then a subset of head vehicles within sets $\mathcal{H}_k^{(r)}, k\in\mathcal{S}^{(r)}$ and within covering set is chosen  as $\mathcal{H}^{(r)}$ to upload aggregated models.

\subsubsection{Model Aggregation}
The server will wait for a period of time to receive model updates from vehicles within the covering segment. If the server receives any model updates within $T_{\text{max}}$, it will aggregate these models as 
\begin{equation}
	%w^{(r+1)}=\sum_{k}\mathbbm{1}_k^{(r)} \mathbf{w}_k^{(r,E)}/(\sum_k \mathbbm{1}_k^{(r)})
	\mathbf{w}^{r+1}=\mathbf{w}^r-\eta\sum_{k}\mathbbm{1}_k^{r}(\frac{g_k^{r}}{\sum_{k} {\mathbbm{1}_k^{r}}}), \forall \sum_{k} {\mathbbm{1}_k^{r}}\neq 0%=\mathbf{w}^r-\eta^{r}G^r
\end{equation}
to get new global model $\mathbf{w}^{r+1}$. For each global model $\mathbf{w}^{r+1}$, we define the set of vehicles as $\mathcal{N}^{r}$ which denotes the set of vehicles uploading models included in the global model as 
\begin{equation}
	\mathcal{N}^{r}=\mathop{\bigcup}_{k,\forall \mathbbm{1}_k^{r}=1}\{k\},
\end{equation}
then the successful training ratio $p^{r}$  is denoted as
\begin{equation}\label{p}
	p^{r}=\frac{|\mathcal{N}^{r}|}{|\mathcal{S}^{r}|}, 
\end{equation}
where $|\mathcal{N}^{r}|$ is the number of receiving uploading models from vehicles and $|\mathcal{S}^{r}|$ is the number of vehicles receiving downloading models. The value of $p^{r}$ is related to both vehicle selection and vehicle mobility. Especially, when all vehicles are stationary, the ratio $p^{r}$ is 1, and the ratio is 0 considering all selected vehicles driving out of the covering segment during local computing. When $p^{r}$ is 0, the server will distribute the global model as $\mathbf{w}^{r+1}=\mathbf{w}^r$. 

\subsection{Training Delay model}
We give the analysis of the delay of the MAVFL scheme in this part. Considering the movement of vehicle $k$, the distance of vehicle $k$ and BS as $L_{k}^{r}$ is related to the topology structure of the road and the movement of vehicles. We define the location coordinate of vehicle $k$ as $x_k$, then the distance between vehicle $k$ and BS can be expressed as  %the length of BS covering segment is $L$, and the entrance location coordinate of the given road is $x_0=L$ with the exit location coordinate is $x_1=0$, then the distance between vehicle $k$ and BS can be expressed as 
\begin{equation}\nonumber
	L_k^r=
	\begin{aligned}
		\sqrt{(L_z)^2+H^2}, x_k^r\in\mathcal{Z}_z, \\
		%&\infty \quad &\text{otherwise},
	\end{aligned}
\end{equation}
where $H$ is the height of BS, $L_z$ is the normalized distance between vehicle and BS in zone $z$, and $\mathcal{Z}_z$ is the range of location for zone $z$.  
%\begin{equation}\nonumber
%	L_k^r=\left\{
%	\begin{aligned}
	%		&\sqrt{(L_z)^2+H^2} \quad &x_k^r\in\mathcal{Z}_z; \\
	%		%&\sqrt{(x_k-L/2)^2+H^2} \quad &\text{case 2} \\
	%		&\infty \quad &\text{otherwise},
	%	\end{aligned}
%	\right
%	.
%\end{equation}

The uplink transmission for selected vehicles is considered as an orthogonal frequency-division multiple access (OFDMA), then the uplink transmission rate of vehicle $k$ is $q_k^r=B_k\log_2\left(1+\frac{P_kh_k(L_k^r)^{-\beta}}{N_0}\right)$
where $B_k$ is the bandwidth allocated for vehicle $k$, $P_k$ is the transmission power, $h_k$ is the channel power gain for vehicle $k$ and $\beta$ is pass loss exponent. Then the uplink time for vehicle $k$ is expressed as $T_{k,\text{c}}^{r}=M/q_k^r$ with uploading model size $M$.

Next we give the expressions of computation time for vehicle $k$  as $T_{k,\text{p}}=\frac{|\mathcal{D}_{k}|g_k}{c_kf_{k}},$
where $g_k$ is the number of GPU cycles required to train one bit of data, $f_k$ is the GPU frequency and $c_k$ is the normalized parameter.  

Considering the setup of server synchronous aggregation, the duration time for the MAVFL scheme is the sum of communication time and computation time as $T=T_{\text{c}}+T_{\text{p}}=\sum_{r\in\mathcal{R}} \max_{k\in\mathcal{K}} [a_k^r(T_{k,\text{c}}^{r}+T_{k,\text{p}})]$, where $a_k^r\in\{0,1\}$ is the indicator representing whether the vehicle is unselected or selected in round $r$. For each selected vehicle, the allocated bandwidth is $B_k=B/(\sum_k a_k^r)$ with total bandwidth~$B$.
%20240723这里是GPU还是CPU呢？没想好。。暂时这么写吧。。。。如果是GPU，换成之前waste这篇文章写的。。。就是第二个公式
%$B_k=B/(p^r \sum_k a_k^r)$

%\begin{equation}
	
	%T_{cp}^r(n_k^r,k)=E(\frac{S^r\vartheta_k}{\xi_n^r})=\rho^rEM(\frac{\vartheta_k}{\xi_n^r}).
%\end{equation}

\section{Convergence Analysis}\label{3}
%\subsection{Convergence Analysis}
In this section, we give the convergence proof of the MAVFL scheme considering the influence of vehicle mobility. Given the convergence, we assume the below assumptions and lemma are satisfied \cite{Mob02,Li2020}.%, which are also used and proved in previous works as \cite{Mob02,Li2020}.

%20240708之前纸上的是分层的FL。。。。

%20240708参考mobility-clustered...
\begin{Assumption}
The loss function $F$ are L-smooth as $\|\nabla F(\mathbf{w}_1) -\nabla F(\mathbf{w}_2)\|\leq L\|\mathbf{w}_1-\mathbf{w}_2\|$
	 for given $\mathbf{w}_1$ and~$\mathbf{w}_2$. 
	 %$F_i{(\mathbf{v})}\leq F_i(\mathbf{w)}+(\mathbf{v}-\mathbf{w})^T\nabla F_i(\mathbf{w})+\frac L2||\mathbf{v}-\mathbf{w}||^2$
\end{Assumption}

\begin{Assumption}[]
	The expected squared norm of stochastic gradients for each vehicle $k$ is upper-bounded: $\mathbb{E}\left\|\nabla f(\mathbf{w}^{r,e}_{k})\right\|^{2}\leq~G^{2}$.
\end{Assumption}

\begin{Assumption}[]
 The variance of mini-batch gradients is upper-bounded:   $\left\|g(\mathbf{w})-\nabla f(\mathbf{w})\right\|^2\leq\sigma^2.$
\end{Assumption}

\begin{Assumption}
	The divergence between local and global loss functions is bounded: $\frac{1}{K}\sum_{k=1}^K\left\|\nabla f(\mathbf{w})-\nabla F(\mathbf{w})\right\|^{2}\leq\epsilon_{g}^{2}.$
\end{Assumption}

%Here we use short expression of overall epoch $t=rE$ for superscript ${r}$ and $t=rE+e$ for superscript ${r,e}$. We define virtual global model as
%\begin{equation}\label{vir}
%	\bar{\mathcal{w}}^{t+1}=\bar{\mathcal{w}}^t-\eta\sum_{k}\mathbbm{1}_k^{t}(\frac{g_k^{t}}{\sum_{k} {\mathbbm{1}_k^{t}}}).
%\end{equation} 

\begin{lemma}
	The divergence of the local model and virtual global model is bounded: $\mathbb{E}\left[\sum_{k=1}^K\left\|\overline{\mathbf{w}}_t-\mathbf{w}_k^t\right\|^2\right] \leq 4K\eta_t^2(E-~1)^2G^2$.
\end{lemma}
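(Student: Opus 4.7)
The plan is to mimic the standard FedAvg drift bound (Li \emph{et al.}\ 2020). I would start by identifying the most recent synchronization epoch $t_0\le t$ with $t-t_0\le E-1$, at which point all local iterates were overwritten by the common global model, so $\mathbf{w}_k^{t_0}=\mathbf{w}^{t_0}$ for every $k$. By the definition $\overline{\mathbf{w}}_t=\tfrac{1}{K}\sum_{k}\mathbf{w}_k^t$, the quantity $\overline{\mathbf{w}}_t-\mathbf{w}^{t_0}$ is exactly the arithmetic mean of $\mathbf{w}_k^t-\mathbf{w}^{t_0}$ across $k$. Because the sample mean minimizes the sum of squared deviations, one immediately gets the key ``variance-reduction'' inequality
\begin{equation*}
\sum_{k=1}^K\bigl\|\overline{\mathbf{w}}_t-\mathbf{w}_k^t\bigr\|^2 \;\le\; \sum_{k=1}^K\bigl\|\mathbf{w}_k^t-\mathbf{w}^{t_0}\bigr\|^2,
\end{equation*}
which removes the coupling through $\overline{\mathbf{w}}_t$ and reduces the task to bounding each individual drift $\|\mathbf{w}_k^t-\mathbf{w}^{t_0}\|^2$.

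Next, I would unroll the local SGD recursion (equation~(2) of the paper) between $t_0$ and $t$ to write $\mathbf{w}_k^t-\mathbf{w}^{t_0}=-\sum_{s=t_0}^{t-1}\eta_s\,g_k^s$, and apply the Cauchy--Schwarz (equivalently Jensen) inequality to the sum of at most $E-1$ terms:
\begin{equation*}
\Bigl\|\sum_{s=t_0}^{t-1}\eta_s\,g_k^s\Bigr\|^2 \;\le\; (t-t_0)\sum_{s=t_0}^{t-1}\eta_s^2\,\|g_k^s\|^2 \;\le\;(E-1)\sum_{s=t_0}^{t-1}\eta_s^2\,\|g_k^s\|^2.
\end{equation*}
Taking expectations and invoking Assumption~2, $\mathbb{E}\|g_k^s\|^2\le G^2$, turns the inner sum into $(E-1)\,G^2\sum_{s=t_0}^{t-1}\eta_s^2$.

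Finally I would use the (standard) convention that the learning-rate sequence is non-increasing and satisfies $\eta_{t_0}\le 2\eta_t$ for any $t_0\le t\le t_0+E-1$, so each $\eta_s^2\le 4\eta_t^2$; summing the at-most-$(E-1)$ squared step sizes yields $\sum_{s=t_0}^{t-1}\eta_s^2\le 4(E-1)\eta_t^2$. This gives $\mathbb{E}\|\mathbf{w}_k^t-\mathbf{w}^{t_0}\|^2\le 4(E-1)^2\eta_t^2 G^2$ for every $k$; summing over the $K$ clients and feeding this back into the variance-reduction inequality above delivers the target bound $4K\eta_t^2(E-1)^2G^2$.

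The main obstacle I anticipate is justifying the factor $4$, which is where the proof is tightest: with a constant step size the naive argument only produces $K\eta^2(E-1)^2G^2$, and the $4$ is purchased by the step-size-decay hypothesis $\eta_{t_0}\le 2\eta_t$. Since the paper cites \cite{Li2020} precisely at this lemma, I would simply import that hypothesis together with the assumption that local synchronization happens exactly every $E$ epochs; the remaining arithmetic is routine.
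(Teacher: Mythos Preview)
The paper does not actually supply a proof of this lemma: it is listed alongside Assumptions~1--4 as a prerequisite ``assumed satisfied'' and attributed to \cite{Mob02,Li2020}; only Theorem~1 is proved in Appendix~A, where Lemma~1 is invoked as a black box in the last line of~(\ref{cc}). Your proposal reproduces precisely the standard argument from \cite{Li2020} (their Lemma~3), which is where the paper imports the result from, and the steps you outline---the variance-reduction inequality replacing $\overline{\mathbf{w}}_t$ by the synchronization point $\mathbf{w}^{t_0}$, the Cauchy--Schwarz unrolling over at most $E-1$ local steps, Assumption~2 for $\mathbb{E}\|g_k^s\|^2\le G^2$, and the step-size hypothesis $\eta_{t_0}\le 2\eta_t$ (which the paper does state explicitly in the hypothesis of Theorem~1) to absorb the factor $4$---are all correct and match the cited source.
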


%$L$ training section
%model as $\mathbf{w}^t$ and $\mathbf{w}^t_k$ as global model and local model for vehicle at

Based on the above Assumptions 1-4 and Lemma 1, we derive the following theorem to prove the convergence of the proposed scheme.  
%20240710暂时写完收敛性证明，后面看多臂老虎机的实验
\begin{theorem}
	Let $\eta_{t_0} \leq 2\eta_t, \forall t-t_0\leq E-1$, then the convergence rate of proposed FL scheme satisfies 
	\begin{equation}\begin{aligned}
			&\frac1T\sum_{t=1}^T\mathbb{E}\bigg[\|\nabla F(\bar{\mathbf{w}}^t)\|^2\bigg] \leq
			\frac{1}{T}\sum_{t=1}^{T}\frac{2}{\eta p^{t} }(\mathbb{E}[F(\bar{\mathbf{w}}^0)]-F_{\mathrm{inf}})\\
			&+{2(\epsilon_{g})^2}+\eta(\delta^2+G^2)L+4\eta^2(E-1)^2G^2L^2%<4\eta^2(E-1)^2G^2L^2
			%&-\sum_{k=1}^K\alpha_{k} G^2
	\end{aligned}\end{equation}
\end{theorem}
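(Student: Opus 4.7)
The plan is to follow the standard perturbed-descent approach for FedAvg-type convergence, augmented by a careful accounting of the random dropout indicator $\mathbbm{1}_k^{r}$ which is what injects the factor $p^{t}$ into the effective step size. First I would introduce the virtual averaged iterate $\bar{\mathbf{w}}^{t}$ and rewrite the aggregation rule (3) as a perturbed gradient step $\bar{\mathbf{w}}^{t+1}-\bar{\mathbf{w}}^{t}=-\eta\,p^{\,r}\bar{g}^{\,r}$, where $\bar g^{\,r}$ denotes the average of the cumulative stochastic gradients $g_k^{r}$ over the successful set $\mathcal{N}^{r}$. Applying the $L$-smoothness of $F$ (Assumption 1) to $\bar{\mathbf{w}}^{t}$ and $\bar{\mathbf{w}}^{t+1}$ produces the usual one-step bound $F(\bar{\mathbf{w}}^{t+1})\le F(\bar{\mathbf{w}}^{t})+\langle\nabla F(\bar{\mathbf{w}}^{t}),\bar{\mathbf{w}}^{t+1}-\bar{\mathbf{w}}^{t}\rangle+\tfrac{L}{2}\|\bar{\mathbf{w}}^{t+1}-\bar{\mathbf{w}}^{t}\|^{2}$, which is the launching pad for the rest of the argument.

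Next I would take expectation over the mini-batch noise and process the two right-hand terms separately. For the inner-product term, the standard add-and-subtract of $\nabla F(\bar{\mathbf{w}}^{t})$ inside $\bar g^{\,r}$ decomposes it into a negative linear contribution $-\eta p^{\,r}\|\nabla F(\bar{\mathbf{w}}^{t})\|^{2}$ plus three error pieces: the stochastic-gradient variance (bounded by $\sigma^{2}$ through Assumption 3, appearing as $\delta^{2}$ in the statement), the local-versus-global gradient divergence $\|\nabla f-\nabla F\|^{2}$ (bounded by $\epsilon_{g}^{2}$ via Assumption 4), and the client-drift term $\|\bar{\mathbf{w}}^{t}-\mathbf{w}_k^{t}\|^{2}$ (bounded by Lemma 1 as $4K\eta^{2}(E-1)^{2}G^{2}$). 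The quadratic term is handled by Jensen's inequality on $\bar g^{\,r}$ together with Assumptions 2 and 3, yielding the $\eta(\sigma^{2}+G^{2})L$ and $4\eta^{2}(E-1)^{2}G^{2}L^{2}$ pieces after factoring out one power of $\eta$ using the step-size condition $\eta_{t_{0}}\le 2\eta_{t}$.

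Collecting terms gives a one-step descent inequality of the form $\mathbb{E}[F(\bar{\mathbf{w}}^{t+1})]-\mathbb{E}[F(\bar{\mathbf{w}}^{t})]\le -\tfrac{\eta p^{t}}{2}\|\nabla F(\bar{\mathbf{w}}^{t})\|^{2}+\mathcal{E}_{t}$, where $\mathcal{E}_{t}$ packages the three error terms. I would then rearrange to isolate $\|\nabla F(\bar{\mathbf{w}}^{t})\|^{2}$ (producing the characteristic $\tfrac{2}{\eta p^{t}}$ prefactor on the objective-gap term), telescope over $t=1,\dots,T$, divide by $T$, and use $F(\bar{\mathbf{w}}^{T})\ge F_{\inf}$ to arrive at the claimed bound.

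The main technical obstacle I expect is the coupling between the random denominator $\sum_k\mathbbm{1}_k^{r}$ and the random numerator $\sum_k\mathbbm{1}_k^{r}g_k^{r}$ of the aggregation: this ratio does not separate cleanly under expectation. The cleanest route is to condition first on the realised survivor set $\mathcal{N}^{r}$, so that inside the conditional world the SGD-noise, divergence and drift bounds all apply deterministically through $|\mathcal{N}^{r}|$, and the ratio collapses to $p^{r}$ on the descent side of the inequality; the outer expectation over $\mathcal{N}^{r}$ then leaves the realised $p^{t}$ in the final bound. A secondary but routine difficulty is aligning the exponents of $\eta$ and $(E-1)$ after telescoping so that the drift contribution matches Lemma 1 exactly, which is precisely what the hypothesis $\eta_{t_{0}}\le 2\eta_{t}$ is designed to facilitate.
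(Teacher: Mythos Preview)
Your high-level strategy---apply $L$-smoothness to the virtual iterate, split into inner-product and quadratic pieces, bound the errors via Assumptions~2--4 and Lemma~1, then telescope---is exactly what the paper does in Appendix~A. However, the mechanism by which the factor $p^{t}$ enters the descent term is misidentified, and this is a genuine gap.

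The aggregation rule~(3) gives $\bar{\mathbf{w}}^{t+1}-\bar{\mathbf{w}}^{t}=-\eta\,\bar g^{\,r}$ with $\bar g^{\,r}=\frac{1}{|\mathcal N^{r}|}\sum_{k\in\mathcal N^{r}}g_k^{r}$; there is \emph{no} $p^{r}$ prefactor on the step itself. Your rewriting $-\eta\,p^{r}\bar g^{\,r}$ is therefore incorrect, and your proposed conditioning fix does not recover it either: once you condition on $\mathcal N^{r}$, the update is simply $-\eta$ times an average over the $|\mathcal N^{r}|$ survivors, so the descent contribution comes out as $-\tfrac{\eta}{2}\|\nabla F(\bar{\mathbf{w}}^{t})\|^{2}$ rather than $-\tfrac{\eta p^{t}}{2}\|\nabla F(\bar{\mathbf{w}}^{t})\|^{2}$, and the telescoped bound would carry $2/\eta$, not $2/(\eta p^{t})$.

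The paper instead keeps the aggregation in the form $\sum_k g_k^{t}\mathbbm{1}_k^{t}/\sum_k\mathbbm{1}_k^{t}$ throughout and extracts $p^{t}$ via the expectation identity
\[
\mathbb E\Bigl[\textstyle\sum_k g_k^{t}\mathbbm{1}_k^{t}\big/\sum_k\mathbbm{1}_k^{t}\Bigr]=p^{t}\sum_k\nabla f(\mathbf w_k^{t}),
\]
invoking independence of mobility and local computation. Crucially the right-hand side is $p^{t}$ times the \emph{sum} over all selected clients, not an average; this makes the inner-product term $-\eta p^{t}\sum_k\langle\nabla F(\bar{\mathbf w}^{t}),\nabla f(\mathbf w_k^{t})\rangle$, and after the polarization $2\langle a,b\rangle=\|a\|^{2}+\|b\|^{2}-\|a-b\|^{2}$ one obtains the $-\tfrac{\eta p^{t}}{2}\sum_k\|\nabla F\|^{2}$ contribution and hence the $2/(\eta p^{t})$ prefactor in the theorem. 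This specific expectation step---relating the survivor-average to $p^{t}$ times the full sum of local gradients---is the move your proposal is missing.
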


%20240928暂时看到此，后面补那个参考文献。。。

\begin{proof}
	See Appendix A. 
\end{proof}
\noindent\textbf{Remark 1.} From the above theorem, we can get the main factor of convergence in the proposed scheme is the real-time ratio $p^{t}$ in (\ref{p}). Considering the affecting factors of $p^{t}$, the vehicle selection choice $a_k^r$ has an important impact on the ratio $p^{t}$, which also influences the convergence speed.

\section{Problem Formulation}\label{4}
%In this section, we propose the optimization problem based on convergence analysis results to minimize the training loss and delay. 

To minimize the training loss and training delay, we design the utility function which combines the proportion of receiving models and normalized training delay as $$\Phi^{r}(a_k^r)=[\alpha p^{r}(a_k^{r})-(1-\alpha) \frac{T(a_k^r)-T_{\text{min}}}{T_{\text{max}}-T_{\text{min}}}],$$where $\mathbf{a}^r=[a_1^r,a_2^r,\ldots,a_K^r]$ is the vehicle selection strategy for all vehicles, $\alpha$ is the parameter between 0 and 1 to balance the impact of training loss and delay, and $T_{\text{min}}, T_{\text{max}} $ are minimum and maximum of round duration time, respectively.  Then we formulate the optimization problem to maximize the sum of utility functions as 
\begin{subequations}
		\begin{align}
			\nonumber\mathbf{P1}:\operatorname*{max}_{\mathbf{a}^{r}}& \sum_k\Phi^{(r)}(a_k^r)\\  %\label{opti1}\\
			\mathbf{s.t}.&  B/\sum_k a_k^r \geq B_{\text{min}},\forall r\in\mathcal{R}, k\in \mathcal{K}  \label{opti2} \\
			&a_k^r\in{0,1}, \forall  r\in\mathcal{R}, k\in\mathcal{K} \label{opti3}\\
			&\sum_k a_k^r=K_0,\forall r\in\mathcal{R}\label{opti4} 
		\end{align}
\end{subequations}
 The inequality in (\ref{opti2}) means the lower bound of bandwidth for each vehicle should be guaranteed. The expression in (\ref{opti3}) indicates the feasibility condition of vehicle selection, and the equality in (\ref{opti4}) gives the number of initially selected vehicles.

The problem P1 is difficult to solve directly because the expression of probability $p^{r}(a_k^r)$ needs future location and velocity information of selected vehicles, which are challenging to get before vehicle selection and training. We design the MAB-based vehicle selection algorithm to solve the optimization problem.

%\mathbf{P1}:\operatorname*{min}_{\{a_{k}^{r}\},\{b_{k}^{r}\}}& \sum_{r=1}^{R}\frac{\max_{k}T(a_{k}^{r},b_{k}^{r})}{N(a_{k}^{r})}  \\
%\mathbf{s.t}.& b_k^r B\geq B_{\text{min}},\forall r\in[R], k\in [K]  \\%& \sum_{k=1}^{K}a_{k}^{r}b_k^r B\leq B,\forall r\in[R]  \\
%&\frac{\beta}{P_{k}}+\gamma h(a_k^{r})<\frac{\epsilon-\zeta}{R},\forall r\in[R], k\in [K]   \\
%&\epsilon>\zeta>0

%the arrival number of vehicles at the beginning of training section follows Poisson distribution  
%$$g(K=k\lambda)=e^{-\lambda T}\frac{(\lambda T)^k}{k!}.$$ 

%The distribution of vehicle speed and arrival is independent of each other.

%\begin{equation}
	%\begin{aligned}
		%&\mathbb{E}[F(\mathbf{w}^{r+1})-F(\mathbf{w}^r)]\leq-\mathbb{E}\langle\nabla F(\mathbf{w}^r),{\eta}_k^{r}G^{r}\rangle+\frac{L}{2}\mathbb{E}\|{\eta}^{r}G^{r}\|^{2} \\
		%&\leq-\sum_{k=1}^{K}\frac{\eta_{k}^r}{2K}(\mathbb{E}\|\nabla F(\mathbf{w}^r)\|^{2}+\mathbb{E}\|{G}^{r}\|^{2}-\mathbb{E}\|\nabla F(\mathbf{w}^r)-{G}^{r}\|^{2})\\ &+\frac{L}{2}\mathbb{E}\|{\eta}^r{G}^{r}\|^{2} \\
		%&\leq\mathbb{E}\sum_{k=1}^K\frac{\eta_k^r}{2K}\|\nabla_kF(\mathbf{w}^r)-g_k^r(\mathbf{w}_k^r)\|^2& \\
		%&-\sum_{k=1}^{K}\frac{\eta_{k}^r}{2K}\mathbb{E}\|\nabla F(\mathbf{w}^r)\|^{2}-\mathbb{E}\sum_{k=1}^{K}\frac{\eta_{k}^r-L\eta_{k}^{2}}{2K}\|{G}^{r}\|^{2}.
	%\end{aligned}
%\end{equation}
%20240708这里不太好加p_r=K_0/K=\sum p_0？单个车的移动概率，怎么跟多个车的概率联系在一起！！！或者转换一下？多个车概率大于单个车？关键是这个加上E的计算。。。直接变成我的概率，求和的话。。。

\section{Proposed Algorithm}\label{5}
In this section, we provide the solution to the problem P1. The proposed vehicle selection problem in P1 can be formulated as an MAB problem \cite{Jiang_2024,Mab1,Liu2023}. The vehicles with the larger estimated accumulated utility function are selected with the upper confidence bound (UCB) policy.

In the proposed MAB vehicle selection algorithm, the exploitation-exploration trade-off is considered with a larger utility function as exploitation and the diversity of participated vehicles as exploration. The details of exploitation and exploration are shown below.

\begin{algorithm}%需要修改S，并阅读V的表示方法。
	\caption{MAB-based vehicle selection algorithm.}
	%20231103按师兄意见，暂时考虑所有车都可以找到区间内对应的D2D车辆。。。且不需要考虑单跳的数量....
	\begin{algorithmic}[1]\label{al}
		%\STATE{\color{red}tbdtbd, too short???}	
		\renewcommand{\algorithmicrequire}{\textbf{Input:}}
		\renewcommand{\algorithmicensure}{\textbf{Output:}}	
		\Require{Location of vehicles $\{x_k^r\}$, number of vehicles selected~$M^r$.}
		\Ensure{The set of selected vehicles $\mathcal{S}^r$.}
		\For{$r\leq R$}
		\If{$r=0$}
		\State{Select $K_0$ vehicles randomly as $\mathcal{S}^{0}$ in $\mathcal{K}$.}
		\Else
		\State{BS calculates the UCB score in (\ref{ucb}) and updates score list $\mathcal{U}[k]=U_k(a^r_k)$ based on $\mathcal{S}^{r-1}$.}
		\State{BS generates vehicle set} \Statex{$\mathcal{S}^{r}=$$\{K_0 \text{ vehicles with largest values of } U_k \text{ in }\mathcal{U}\}.$}\nonumber
		\EndIf
		\State{BS distributes global model to vehicles in $\mathcal{S}^{r}$.}
		\EndFor
	\end{algorithmic}
\end{algorithm}

\subsubsection{Exploitation}
We record the times of vehicle $k$ been selected before round $r$ as $$ M^{(r)}(\lambda,a_k^r)=\sum_{\tau=1}^{r} \lambda^{r-\tau}\mathbbm{1}(a_k^r=1), $$ where $\lambda$ denotes as discount factor between 0 and 1 to measure the importance of recent choices. Then we can get the discounted empirical average as 
\begin{equation}
	\bar{\Phi}^{r}(\lambda,a_k^r)=\frac{\sum_{\tau=1}^{r} \lambda^{r-\tau}\mathbbm{1}(a_k^r=1)\Phi^{(r)}(a_k^r)}{M^{r}(\lambda,a_k^r)}
\end{equation}
which gives more weight to recent objective functions.

\subsubsection{Exploration}
We give the function of the UCB index of the proposed scheme as 
\begin{equation}
	c_k(\lambda, a_k^r)=\sqrt{\frac{2\log n(r,\lambda)}{M^{r}(\lambda,a_k^r)}}, 
\end{equation}
where $n(r,\lambda)$ records the total number of all the vehicles selected before round $r$. This UCB index function facilitates the selection and exploration of alternative vehicles in other zones. At the beginning of the selection, if vehicle $k$ is not chosen for a while, then $n(r,\lambda)$ will increase with the training and $ M^{r}(\lambda,a_k^r)$ remains the same. In subsequent training rounds, vehicle $k$ will be chosen frequently with a larger UCB index. When the vehicle $k$ is selected for many rounds, the parameter $ M^{r}(\lambda,a_k^r)$ will increase bringing in a decreasing UCB index.

Based on the above formulations, the upper UCB index in MAVFL scheme is denoted as 
\begin{equation}\label{ucb}
	U_k(a_k^r)=\bar{\Phi}^{r}(\lambda,a_k^r)+c_k(\lambda, a_k^r). 
\end{equation}
During the training process for the largest $K_0$ values of $U_k$, the vehicle selection choices $a_k^r$ will be set as 1. All the selection choices are learned from past rounds of UCB scores. The details of the vehicle selection process are described in Algorithm. 1.

%202401008该到此，这部分大改。。。
\section{Simulation Results}\label{6}
%Time frame $T$. TBD V2X communication...
%In our proposed scheme, vehicles will drive into a covering segment of server. The channel power gain between vehicle $k$ and BS is shown as  
%\addtolength{\topmargin{0.06in}}

%In this section, we propose the simulation results of MAVFL scheme with MAB-based vehicle selection algorithms. We show the figures and table of the relation between accuracy and delay of MAVFL scheme under different vehicle selection algorithms. 

\subsection{Simulation Setup}
In the simulation, we consider a flow of vehicles driving through a straight road with a length of 1,000 m. The mobility pattern of the vehicle is the intelligent driver model, and the base station is situated in the middle of the road with a height of 25 m. For vehicle ML tasks, we consider image classification tasks with the CIFAR-10 dataset and GTSRB dataset, respectively. For the CIFAR-10 dataset, we utilize the ResNet-18 model, and for the GTSRB dataset, we utilize the LeNet model. Each vehicle contains 600 samples with independent identically distributed. Other relevant parameters used in this simulation are listed in Table \ref{para}.

\begin{table}[htbp]
	\caption{Simulation Parameters.\label{para}}\small
	\centering
	\begin{tabular}{|c|c|c|}	
		\hline
		\textbf{Parameter} & \textbf{Description} & \textbf{Value} \\ \hline
		$\eta$\ &Learning rate&0.01     \\ \hline
		$B_a$\ &Batch size&32\\ \hline
		%$L$\ &Length of road segment&1,000 m \\ \hline
		$Z$\ &The number of zones&20 \\ \hline
		%$H$\ &BS height&25 m \\ \hline
		$G_v$\ &BS antenna gain& 6 dBi \\ \hline
		$l_p$\ &Pass loss model& 128.1+37.6$\log_{10} d$ \\ \hline
		$B$\ &Bandwidth&3 MHz  \\ \hline
		$P_n$\ &Noise power&-114 dBm \\ \hline
		$v$\ &vehicle velocity&\{60, 80\} km/h \\ \hline
		%$L$\ &Length of covering segment&1000 m \\ \hline
		%$K_0$\ &The number of selected CV&3 \\ \hline
		$f_k$\ &GPU frequency&1.3 Ghz \\ \hline
		$\alpha$\ &Weight parameter 
		&0.6 \\ \hline
	\end{tabular}
\end{table}

%We compare the performance between our scheme and  
The baselines are as follows:

\begin{itemize} %20240816改名字
	\item \textbf{Communication-based selection (CBS)}: The vehicles are selected with the nearest distance from the BS with better communication conditions in each round. 
	\item \textbf{Remain-time based selection (RBS):} The vehicles are selected with the longest remaining time in the covering segment in each round. 
	\item \textbf{Random}: The vehicles are selected randomly through the covering segment. 
\end{itemize}

%One is "Random" which means in each round BS selects vehicles within covering segment randomly, and another is "Heuristic" as we always select vehicles within zone nearest to BS. 
\begin{table*}[htbp]
	\caption{Performance comparison on CIFAR-10 and GTSRB datasets.}
	\centering
	\label{table_1}
	\setlength{\tabcolsep}{3.5pt}
	\begin{tabular}{lcccc}
		\toprule
		\multicolumn{1}{c}{} & \multicolumn{2}{c}{\textbf{CIFAR-10}} & \multicolumn{2}{c}{\textbf{GTSRB}} \\
		\cmidrule(r){2-3} \cmidrule(r){4-5}
		\multicolumn{1}{c}{} & \multicolumn{2}{c}{Training delay to 75\% accuracy} & \multicolumn{2}{c}{Training delay to 90\% accuracy} \\
		\cmidrule(r){2-3} \cmidrule(r){4-5}
		\multicolumn{1}{c}{Method} & Training Delay for 60 km/h (sec.) & Training Delay for 80 km/h (sec.) & Training Delay for 60 km/h (sec.) & Training Delay for 80 km/h (sec) \\
		\cmidrule(r){1-1} \cmidrule(r){2-3} \cmidrule(r){4-5}
		Proposed & 2326.27 & 2357.96 & 320.24 & 339.66 \\
		CBS & 2812.62($\times$ 1.21) & 2558.51($\times$ 1.08) & 457.67($\times$ 1.42) & 492.72($\times$ 1.45) \\
		RBS & 2613.61($\times$ 1.12) & 3890.03($\times$ 1.65) & 398.5 ($\times$ 1.24) & 384.03 ($\times$ 1.13) \\
		Random & 2974.8($\times$ 1.27) & N/A & 666.14 ($\times$ 2.08) & 1298.26 ($\times$ 3.82) \\
		\bottomrule
	\end{tabular}
\end{table*}

\subsection{Simulation Results}
In Fig. {\ref{fig_1}}, we compare the training accuracy for the above vehicle selection algorithms with different velocities as 60 km/h and 80~km/h with the CIFAR-10 dataset. From Fig. \ref{fig_1_1} and \ref{fig_1_2}, we observe that the convergence speed of the proposed vehicles selection algorithm with MAB is faster, and the overall time of the proposed scheme can be substantially reduced compared to two heuristic and one random algorithms. Moreover, the reduction of the overall time is decreased on average with higher velocity because the number of vehicles is less with larger distance between vehicles. 

{Figure 3 shows the performance of different algorithms on the GTSRB dataset. We can observe that the convergence speed of the proposed vehicle selection algorithm with MAB is faster than the three baseline algorithms. We note that the improved accuracy in the GTSRB dataset is smaller than in the CIFAR dataset.  It is also worth noting that the convergence time required for GTSRB is shorter than CIFAR-10. The reason for these two situations is that the GTSRB dataset is easier to train with fewer training samples compared with CIFAR-10.}

{Table \ref{table_1} shows the results of the delay in target accuracy performance of the proposed scheme with two datasets. Especially, for the CIFAR-10 dataset, we compare the delay of all vehicle selection methods reaching 75\% accuracy, and the 90\% accuracy is considered for the GTSRB dataset. For the CIFAR-10 dataset, compared with CBS and RBS selection algorithms, the MAB algorithm is nearly 23\% faster for 60 km/h, and approximately 50\% faster for 80 km/h. In particular, the Random algorithm can not reach 80\% accuracy within the given deadline. Meanwhile, the proposed scheme is approximately 32\% faster than CBS and RBS methods for the GTSRB dataset.}

\begin{figure}[ht]
	\centering
	\subfloat[CIFAR-10 with 60 km/h\label{fig_1_1}]{
		\includegraphics[width=0.4\textwidth]{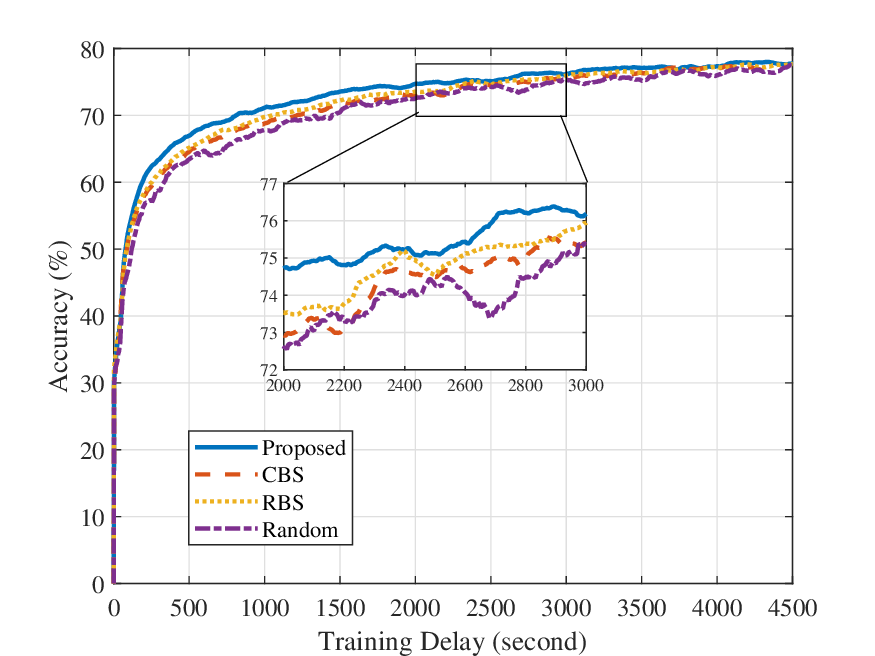}}
	\ 
	%\quad
	\subfloat[CIFAR-10 with 80 km/h\label{fig_1_2}]{
		\includegraphics[width=0.4\textwidth]{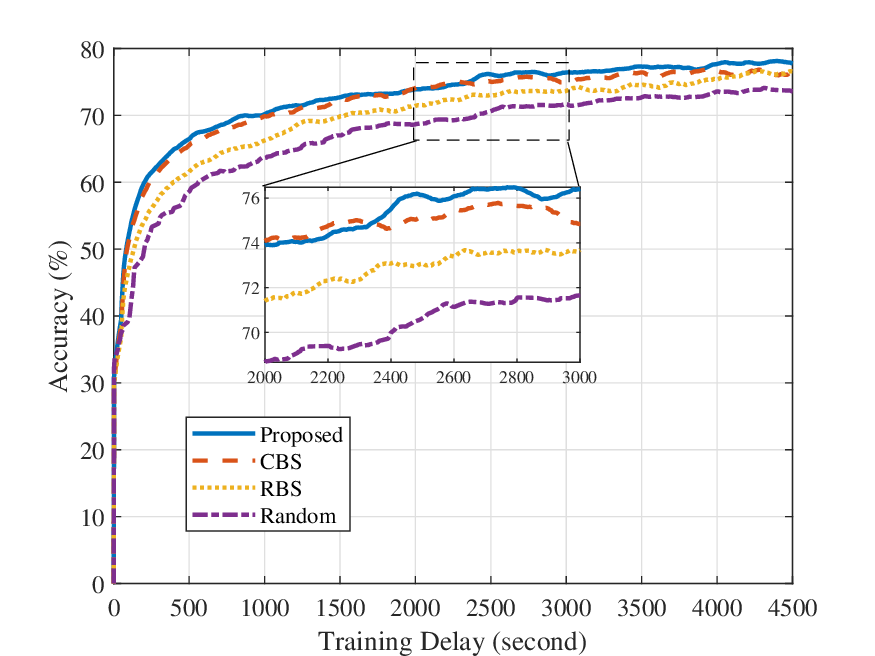}
	}
	\caption{Training performance of different vehicle selection algorithms for CIFAR-10\label{fig_1}}
\end{figure}

\begin{figure}[ht]
	\centering
	\subfloat[GTSRB with 60 km/h\label{fig_2_1}]{
		\includegraphics[width=0.4\textwidth]{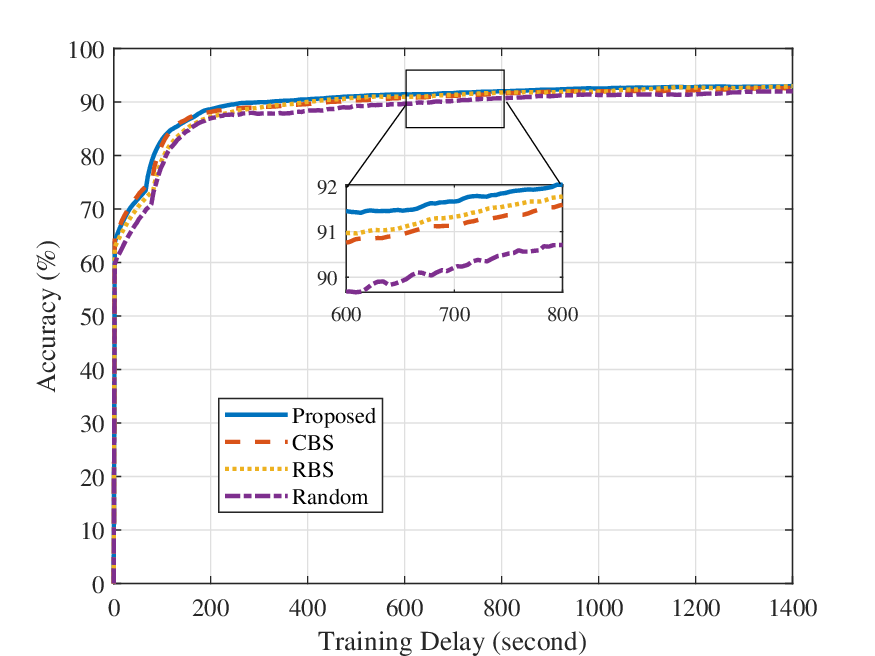} }
	\
	%\quad
	% 百度里有个方法说这里要空格，但我不空格也是上下排版的，疑惑
	\subfloat[GTSRB with 80 km/h\label{fig_2_2}]{
		\includegraphics[width=0.4\textwidth]{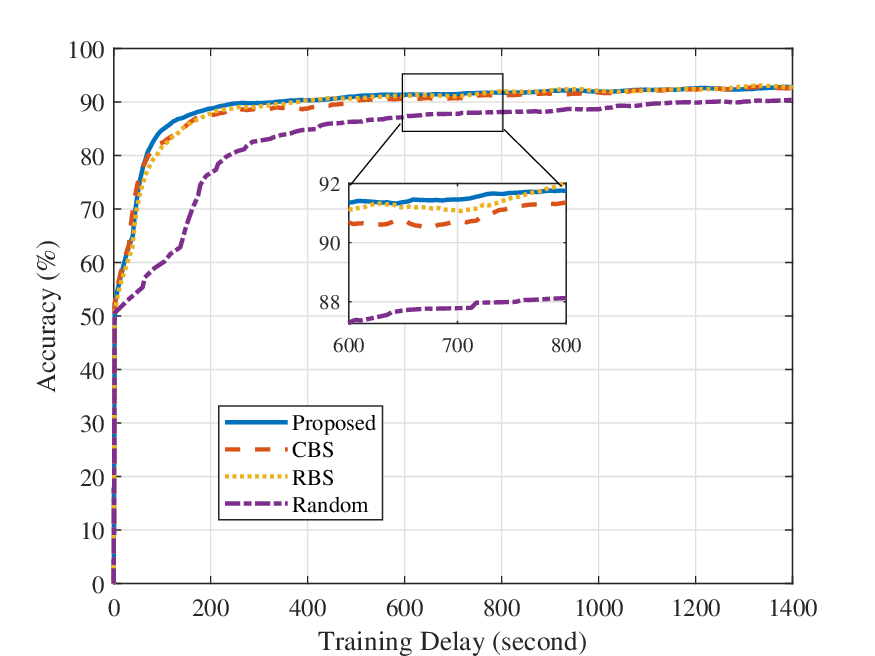}
	}
	\caption{Training performance of different vehicle selection algorithms for GTSRB\label{fig_2}}
\end{figure}

%In Table. \ref{table_1}, we show details of the training delay for different vehicle methods. For CIFAR-10 dataset, considering delay achieving 80\% accuracy for 60 km/h, our proposed MAB-based selection algorithm speeds up 16.6..\% with GSBS, 28.5..\% with GSRT, and 37.5..\% with RS. For GTSRB dataset with 60 Km/h, our proposed MAB-based selection algorithm  speeds up 19.64..\% comparing with GSRT, 30\% with GSBS, and more than 43.02 \% with RS. Based on above results, we can get our proposed scheme outperforms other selection algorithms with less delay and faster convergence.

% Please add the following required packages to your document preamble:
% \usepackage{multirow}

\section{Conclusion}\label{7}
In this paper, we have designed an MAVFL scheme and proposed a real-time ratio to reflect the successful training participation rate. We also analyzed the impact of the proposed ratio on convergence results. We have formulated an optimization problem to decrease training delay by selecting suitable vehicles through the MAB-based vehicle selection algorithm. The proposed MAB-based algorithm provides a new feasible solution for real-time vehicle selection. {In future work, we will explore the proposed scheme for vehicles with computing and data heterogeneity.}
\section*{Acknowledgment}
%This work was supported in part by the Peng Cheng Laboratory Major Key Project under Grants PCL2023AS1-5 and PCL2021A09-B2, and in part by the Natural Science Foundation of China under Grant 62201311. %, and in part by the Young Elite Scientists Sponsorship Program by CAST under Grant 2023QNRC001.
This work was supported in part by the Peng Cheng Laboratory Major Key Project under Grants PCL2023AS1-5 and PCL2021A09-B2, in part by the Natural Science Foundation of China under Grant 62201311, and in part by the Young Elite Scientists Sponsorship Program by CAST under Grant 2023QNRC001.
\begin{figure*}[ht]
	\centering
	%\hrulefill
	\begin{equation}
		\begin{aligned}\label{aa}
			\mathbb{E}\bigg[F({\bar{\mathbf{w}}}^{t+1})\bigg]%= \mathbb{E}\bigg[F\left(\bar{\mathbf{w}}^{t}-\eta\sum_{k}\frac{g_k^t\mathbb{I}_k^t}{\sum_k \mathbb{I}_k^t}\right)\bigg] 
			{\operatorname*{\leq}}\mathbb{E}\bigg[F(\bar{\mathbf{w}}^{t})\bigg] 
			+\frac{\eta^2L}2\mathbb{E}\bigg[\|\sum_k(g_k^t\mathbb{I}_k^t/\sum_k \mathbb{I}_k^t)\|^2\bigg] 
			-\eta\mathbb{E}\bigg[\langle\nabla F(\bar{\mathbf{w}}^{t}),\sum_k (g_k^t\mathbb{I}_k^t/\sum_k \mathbb{I}_k^t)\rangle\bigg]
		\end{aligned}
	\end{equation}
	\begin{equation}
		\begin{aligned}\label{bb}
			\mathbb{E}& \left[\left\|\sum_k(g_k^t\mathbb{I}_k^t/\sum_k \mathbb{I}_k^t)\right\|^{2}\right]
			%20240905这里是因为最后一行的二范数，乘以的时候第一个是0.。。（车辆运动与采样的随机性。。。）
			{\operatorname*{=}}\mathbb{E}\left[\left\|   \sum_k(g_k^t\mathbb{I}_k^t/\sum_k \mathbb{I}_k^t)-p^t\sum_k\nabla f(\mathbf{w}^t_k) \right\|^2\right]-\mathbb{E}\left[\left\|p^t\sum_k\nabla f(\mathbf{w}^t_k)\right\|^2\right] \\
			&{\operatorname*{=}}\mathbb{E}\bigg[\bigg\|\sum_k(g_k^t\mathbb{I}_k^t/\sum_k \mathbb{I}_k^t)-p^t\sum_k\nabla f(\mathbf{w}^t_k)\bigg\|^2\bigg] -(p^t)^2\sum_{k}\mathbb{E}\bigg[\bigg\|\nabla f(\mathbf{w}_{k}^t)\bigg\|^2\bigg]\\
			&=\mathbb{E}\bigg[\bigg\|\sum_k(g_k^t\mathbb{I}_k^t/\sum_k \mathbb{I}_k^t)-\sum_k(\nabla f(\mathbf{w}^t_k)-(1-p^t)\nabla f(\mathbf{w}^t_k))\bigg\|^2\bigg]-(p^t)^2\sum_{k}\mathbb{E}\bigg[\bigg\|\nabla f(\mathbf{w}_{k}^t)\bigg\|^2\bigg]\\
			%20240905这里同样是因为最后一行的二范数，乘以的时候第一个是0.。。
			&{=}\sum_{k}\mathbb{E}\bigg[\|(g_k^t\mathbb{I}_{k}^t/\sum_k \mathbb{I}_k^t)-\nabla f(\mathbf{w}_{k}^t)\|^2\bigg]
			-(1-p^t)^2\sum_{k}\mathbb{E}\bigg[\|\nabla f(\mathbf{w}_{k}^t)\|^2\bigg]-(p^t)^2\sum_{k}\mathbb{E}\bigg[\bigg\|\nabla f(\mathbf{w}_{k}^t)\bigg\|^2\bigg]\\%dddd (no need 28, turn to 29)
			&\leq p^tK(\delta^2+G^2)%20240709 29与上面的合并
		\end{aligned}
		%{\noindent} \rule[-10pt]{18cm}{0.05em}
	\end{equation}
	%\hrulefill
	\begin{equation}\label{cc}
		\begin{aligned}
			-&\eta\mathbb{E}\bigg[\langle\nabla 	F(\bar{\mathbf{w}}^{t}),\sum_{k}(g_{k}^{t}\mathbb{I}_{k}^{t}/\sum_{k}\mathbb{I}_{k}^{t})\rangle\bigg]{=}-\eta 	p^t\sum_{k}\mathbb{E}\bigg[\langle\nabla F(\bar{\mathbf{w}}^{t}),\nabla f(\mathbf{w}^t_k)\rangle\bigg]\\&{=}-\frac{\eta p^{t}}{2}\sum_{k}\bigg(\mathbb{E}\bigg[\|\nabla F(\bar{\mathbf{w}}^{t})\|^{2}\bigg]+\mathbb{E}\bigg[\|\nabla f(\mathbf{w}_{k}^{t})\|^{2}\bigg]-\mathbb{E}\bigg[\|\nabla F(\bar{\mathbf{w}}^{t})-\nabla f(\mathbf{w}_{k}^{t})\|^{2}\bigg]\bigg)\\
			&{=}-\frac{\eta p^{t}}{2}\sum_{k}\bigg(\mathbb{E}\bigg[\|\nabla F(\bar{\mathbf{w}}^{t})\|^{2}\bigg]+\mathbb{E}\bigg[\|\nabla f(\mathbf{w}_{k}^{t})\|^{2}\bigg]-\mathbb{E}\bigg[\|\nabla F(\bar{\mathbf{w}}^{t})-\nabla F(\mathbf{w}_k^t)+\nabla F(\mathbf{w}_k^t)-\nabla f(\mathbf{w}_{k}^t)\|^{2}\bigg]\bigg)\\
			&\leq -\frac{\eta p^{t}}{2}\sum_{k}\bigg(\mathbb{E}\bigg[\|\nabla F(\bar{\mathbf{w}}^{t})\|^{2}\bigg]+\mathbb{E}\bigg[\|\nabla f(\mathbf{w}_{k}^{t})\|^{2}\bigg]-2\mathbb{E}\bigg[\|\nabla F(\bar{\mathbf{w}}^{t})-\nabla F(\mathbf{w}_k^t)\|^{2}+\mathbb{E}\|\nabla F({\mathbf{w}}^{t}_k)-\nabla f(\mathbf{w}_k^t)\|^{2}\bigg]\bigg)\\
			%&\leq -\frac{\eta p^{(t)}}{2}\sum_{k}\bigg(\mathbb{E}\bigg[\|\nabla F(\bar{\mathbf{w}}^{(t)})\|^{2}\bigg]+G^2-2\bigg[L^2\mathbb{E}\| \bar{\mathbf{w}}^{(t)}- \mathbf{w}_k^t\|^{2}+\epsilon_{g}^2\bigg]\bigg)\\ %20240710两个平方和见(32)，。。。20240710补...
			%&\leq -\frac{\eta p^{t}}{2}\sum_k(\mathbb{E}\|\nabla F(\bar{w}^{(t)})\|^{2}+4p^{(t)}\eta^3(E-1)^2G^2L^2+{\eta p^{(t)} \epsilon_g^2})
			&\leq -\frac{\eta p^{t}}{2}\sum_k(\mathbb{E}\|\nabla F(\bar{\mathbf{w}}^{t})\|^{2}-4\eta^2(E-1)^2G^2L^2-2{\epsilon_g^2})
			%20240710补完结论，下面的换了。。。。
		\end{aligned}
	\end{equation}
	\hrulefill
\end{figure*}

\appendices
\section{Proof of Theorem 1}
%\subsection{ss}
%Based on the definition of virtual global model in (\ref{vir}), we can get the results in (\ref{vv}).
%\begin{equation}\label{aa}
%	\begin{aligned}
%		\mathbb{E}\bigg[F({\bar{w}}^{t+1})\bigg]& = \mathbb{E}\bigg[F\left(\bar{\mathbf{w}}^{t}-\eta\sum_{k}(g_k^t\mathbb{I}_k^t/\sum_k \mathbb{I}_k^t)\right)\bigg] \\
%		&{\operatorname*{\leq}}\mathbb{E}\bigg[F(\bar{\mathbf{w}}^{t})\bigg] \\
%		&+\frac{\eta^2L}2\mathbb{E}\bigg[\|\sum_k(g_k^t\mathbb{I}_k^t/\sum_k \mathbb{I}_k^t)\|^2\bigg] \\
%		&-\eta\mathbb{E}\bigg[\langle\nabla F(\bar{\mathbf{w}}^{t}),\sum_k (g_k^t\mathbb{I}_k^t/\sum_k \mathbb{I}_k^t)\rangle\bigg],
%	\end{aligned}
%\end{equation}
Based on the definition of the virtual global model as $\bar{\mathbf{w}}^{t+1}=\bar{\mathbf{w}}^t-\eta\sum_{k}\mathbbm{1}_k^{t}(\frac{g_k^{t}}{\sum_{k} {\mathbbm{1}_k^{t}}})$, we can get the results in (\ref{aa}). Considering the independence of vehicle mobility and local computing, we can get $\mathbb{E}[\sum_k (g_k^t\mathbb{I}_k^t/\sum_k \mathbb{I}_k^t)]=p^t\sum_k\nabla f(\mathbf{w}^t_k),$ which means the expected proportion of the number of model updates obtained by server is $p^{t}$ in round $t$ .

For the second term in (\ref{aa}), we can get the results in (\ref{bb}) at the top of the page.
For the third term in (\ref{aa}), we can get the results in (\ref{cc}). Then we substitute the second and third parts in (\ref{aa}) with (\ref{bb}) and (\ref{cc}) and average over all global rounds to finish the proof.

%We substitute the second and third part in (\ref{aa}) with (\ref{bb}) and (\ref{cc}) and averaging over all global rounds to finish the proof.
%{\color{red} The (a)(b)) c may need to be removed}

\bibliographystyle{IEEEtran}
\bibliography{paper-re-workshop-ver}

\end{document}